\let\NAT@parse\undefined
\def\footnoterule{\relax%
  \kern-5pt
  \hbox to \columnwidth{\vrule width 0.5\columnwidth height 0.4pt\hfill}
  \kern4.6pt}
\theoremstyle{plain}
\newtheorem{theorem}{Theorem}
\newtheorem{prop}{Proposition}
\newtheorem{corollary}{Corollary}
\newtheorem{assumption}{Assumption}
\DeclareMathOperator{\rank}{rank}
\title{\LARGE \bf
Active Depth Estimation: Stability Analysis and its Applications
}
\author{R\^{o}mulo T. Rodrigues$^{1}$, Pedro Miraldo$^{2}$, Dimos V. Dimarogonas$^{2}$, and A. Pedro Aguiar$^{1}$
\thanks{This work was supported by PDMA-NORTE-08-5369-FSE-000061, UIDB/00147/2020 SYSTEC through the FCT/MCTES (PIDDAC); FCT projects POCI-01-0145-FEDER-031823; IMPROVE, POCI-01-0145-FEDER-031411-HARMONY; STRIDE NORTE-01-0145-FEDER-000033; LARSyS - FCT Plurianual funding 2020-2023; the Swedish research council (VR); Swedish Foundation for Strategic Research (SSF); and the Knut och Alice Wallenberg Foundation (KAW).}
\thanks{$^{1}$R. T. Rodrigues and A. P. Aguiar are with the Research Center for Systems and Technologies (SYSTEC), Faculty of Engineering, University of Porto, Porto, Portugal.\newline
        E-Mail:{\tt\small rtr@fc.up.pt} and {\tt\small pedro.aguiar@fe.up.pt}.} %
\thanks{$^{2}$P. Miraldo and D. V. Dimarogonas are with the Division of Decision and Control Systems, KTH Royal Institute of Technology, Stockholm, Sweden.\newline
        E-Mail:{\tt\small \{miraldo,dimos\}@kth.se}.}%
}
\renewcommand{\maketag@@@}[1]{\hbox{\m@th\normalsize\normalfont#1}}%
\begin{document}
\maketitle
\thispagestyle{empty}
\pagestyle{empty}

\begin{abstract}
Recovering the 3D structure of the surrounding environment is an essential task in any vision-controlled Structure-from-Motion (SfM) scheme. This paper focuses on the theoretical properties of the SfM, known as the incremental active depth estimation. The term incremental stands for estimating the 3D structure of the scene over a chronological sequence of image frames. Active means that the camera actuation is such that it improves estimation performance. Starting from a known depth estimation filter, this paper presents the stability analysis of the filter in terms of the control inputs of the camera. By analyzing the convergence of the estimator using the Lyapunov theory, we relax the constraints on the projection of the 3D point in the image plane when compared to previous results. Nonetheless, our method is capable of dealing with the cameras' limited field-of-view constraints. The main results are validated through experiments with simulated data.
\end{abstract}

\section{INTRODUCTION}
Structure-from-Motion (SfM) aims at recovering the 3D structure of the environment from a moving camera. It is used when the motion of the camera and its intrinsic parameters are known. This is one of the more important modules in applications such as: autonomous navigation \cite{olson02}, UAV flight control \cite{li08}, robot hand-eye calibration \cite{andreff01}, topographic surveying \cite{clapuyt16}, and multi-robot relative pose estimation~\cite{romulo19}. The SfM problem has been studied for the last three decades by the roboticists and computer vision researchers. Below, we categorize available solutions as geometric/filtering based methods and passive/active techniques.

Geometric-based techniques \cite{koenderink91,bartoli05,schonberger16} often apply triangulation for estimating the depth of the points from two or more different viewpoints. The frames do not need to be consecutive, and this method is usually followed by an offline non-linear refinement such as bundle adjustment \cite{triggs00}. Geometric-based techniques provide accurate results but suffer from small baseline camera displacements. On the other hand, filter or incremental-based methods, such as \cite{civera08,luca08,martinelli12}, explicitly consider the dynamics of projected 3D points into a sequence of continuously acquired images. Incremental strategies focus on efficient computation and take advantage of the small continuous motions of the camera (small displacements). Besides, incremental-based techniques aim at getting a robust estimation of the model uncertainties. 

The works mentioned in the previous paragraph are passive, i.e., the camera motion is not used to the goal of mapping the 3D environment. In the last decade, some authors have been studying the use of active vision techniques to assist the structure-from-motion modules. The authors in \cite{chaumette06} propose the use of 3D reconstruction goals in the control loop. They use the proposed method in the reconstruction of 3D points, cylinders, straight lines, and spheres.
In \cite{spica13}, the authors address an active strategy for tuning the transient response of a particular class of nonlinear observers that are well suited for active SfM problems. The technique is applied to a 3D point active SfM scheme. The framework was later used for the cases of cylinder, spheres (see \cite{spica14}), 3D planes (in \cite{spica15}), and 3D straight lines (see \cite{mateus18,mateus19}). There are also works on high level controllers based on SfM. For example, \cite{romulo18} presents a method to actively ensure the presence of good features in a structure-from-motion module, and \cite{costante2018} proposes an optimal path planning framework that maximizes the visual information during navigation. 

In this paper, we study the stability analysis for an incremental active SfM using point features. The goal is to understand under what conditions it is possible to obtain an online estimation of the unknown depth of a point feature, from any initial condition. We resort to the knowledge of the motion of the camera and the 2D image plane coordinates of the projected 3D point. Our work builds on top of the incremental depth estimator addressed in \cite{spica13, spica14}, where some guarantees for its stability and how to maximize its convergence speed were studied. However, for a point feature, the asymptotic stability result in \cite{spica13, spica14} only holds if 1) the camera motion drives the projection of the point to the origin of the image frame, and 2) the depth (unknown parameter being estimated) is constant after a transient. As a consequence, some issues arise in practical applications. For example, in \cite{spica17a}, the results of \cite{spica13} are applied to the coupled depth estimation and visual servo control problem. The strategy strives to increase the convergence speed, but the convergence properties are not met. This results from the requirement of translating the projection of a point to the origin of the image frame, which conflicts with the visual servoing goal.

In our work, we take a step back to first analyze the camera actuation policies that provide asymptotic stability guarantees on the depth estimation of a single feature. In contrast to previous works with similar stability properties, we do not require the tracked feature to lie in (or visit) the origin of the image frame. Moreover, the unknown depth is not necessarily constant throughout the estimation process. 

The next section presents the notations and background work. Section~\ref{sec:convergence_estimator} presents the stability analysis of the active filter. Then,  Sec.~\ref{sec:single_point_application} discusses its use in a single 3D point mapping application. Simulation results are shown in Sec.~\ref{sec:exp}, and Sec.~\ref{sec:conclusions} concludes the paper.

\section{PRELIMINARIES}
\label{sec:problem_definition}
This section presents notations and background work that support the remainder of this document. 

\subsection{Notation}
Scalars are written in lower case letters and column vectors typed in bold symbol lower case letters. A vector can be split into smaller pieces using the notation $\mathbf{v}_{(i:j)} := [v_i, v_{i+1}, \dots, v_j]^T$. Matrices are printed in upper case letter, as well as the coordinates of a 3D Point. 

\subsection{Background}
Consider a camera moving freely in space and let $\{C\}$ be the coordinate frame attached to the origin of the sensor. The camera observes a static 3D point described in $\{C\}$ as $\mathbf{p}:=[X,Y,Z]^T \in \mathbb{R}^3$. Let $\mathbf{s}:= [x,y]^T = [X/Z,Y/Z]^T \in \mathbb{R}^2$ be the projection of $\mathbf{p}$ into the camera's normalized image plane and consider the change of variable $\chi=1/Z$. Applying the new variables in the well-known optical flow equation \cite{hartley03} gives
\begin{equation}
    \begin{bmatrix}
        \dot{x} \\ \dot{y}  \\ \dot{\chi}
    \end{bmatrix} =
    \begin{bmatrix}
        -\chi & 0 & x\chi & xy & -(1+x^2) & y \\
        0 & -\chi & y\chi & 1+y^2 & -xy & -x \\
        0 & 0 & \chi^2 & y\chi & -x\chi & 0
    \end{bmatrix}
    \begin{bmatrix}
    \mathbf{v} \\ \mathbf{w}
    \end{bmatrix},
    \label{eq:point_velocity_camera_frame_normalized}
\end{equation}
where $\mathbf{v}:=[v_x,v_y,v_z]^T \in \mathbb{R}^3$ and $\mathbf{w}:=[w_x,w_y,w_z]^T \in \mathbb{R}^3$ are the camera linear and angular velocities described in $\{C\}$. The dynamics of the system can be stated in compact form
\begin{equation}
\begin{cases}
    \dot{\mathbf{s}} &= J_v\mathbf{v}\chi + J_w\mathbf{w} \\
    \dot{\chi} &= J_q\mathbf{v}\chi^2 + J_l\mathbf{w}\chi
    \label{eq:system_dynamics_I}
\end{cases},
\end{equation}
where
\begin{equation}
\begin{cases}
    J_v = \begin{bmatrix}
        -1 & 0 & x  \\
        0 & -1 & y 
    \end{bmatrix}\\
     J_w = \begin{bmatrix}
        xy & -(1+x^2) & y \\
        1+y^2 & -xy & -x 
    \end{bmatrix}\\
    J_q = \begin{bmatrix}
        0 & 0 & 1 
    \end{bmatrix}, \qquad
    J_l = \begin{bmatrix}
        y & -x & 0 
    \end{bmatrix}\\
\end{cases}.
\label{eq:system_matrices}
\end{equation}

Given $\mathbf{s}, \mathbf{v}\text{, and }\mathbf{w}$, we want to estimate the unknown depth described by $\chi$ (also denoted as unmeasurable variable). For that, consider the following notations. The estimation variables are $\hat{\mathbf{s}}$ and $\hat{\chi}$. The respective estimation errors are $\tilde{\mathbf{s}} = \mathbf{s} - \hat{\mathbf{s}}$ and $\tilde{\chi} = \chi - \hat{\chi}$. The state estimation problem addressed here uses an observer similar to \cite{spica13}:
\begin{equation}
    \begin{cases}
    \dot{\hat{\mathbf{s}}} = J_v \mathbf{v} \hat{\chi} + J_w\mathbf{w} + k_{s}\tilde{\mathbf{s}} \\
    \dot{\hat{\chi}} = J_q \mathbf{v} \hat{\chi}^2  + J_l \mathbf{w} \hat{\chi} + k_{\chi}(J_v \mathbf{v})^T \tilde{\mathbf{s}}
    \end{cases},
    \label{eq:observer_strategy}
\end{equation}
where $k_s,k_{\chi}\in\mathbb{R}^+$ are the control gains. The corresponding estimation error dynamics is
\begin{equation}
    \begin{cases}
    \dot{\tilde{\mathbf{s}}} = J_v \mathbf{v} \tilde{\chi} - k_s\tilde{\mathbf{s}} \\
    \dot{\tilde{\chi}} =\tilde{\chi}(J_q \mathbf{v}({\chi} + \hat{\chi})  + J_l \mathbf{w}) - k_{\chi}(J_v \mathbf{v})^T \tilde{\mathbf{s}}
    \end{cases}.
    \label{eq:observer_error_dynamics}
\end{equation}

\section{CONVERGENCE OF THE ESTIMATOR}
\label{sec:convergence_estimator}
In this section we provide the stability analysis of the depth estimation filter. The goal is to provide guarantees for the convergence of the unmeasurable depth for recovering the 3D structure of the world given by $\mathbf{p} = [\mathbf{s},1]/\chi$.

\begin{assumption}
The observed 3D point cannot lie behind the camera. Consequently, we restrict our analysis to the domain where $\chi$ is positive, that is, we assume $\chi \geq 0, \forall t$.
\label{assumption:chi_positive}
\end{assumption}

This assumption has an explicit physical meaning. In fact, cameras are not able to observe 3D points that are behind them. This would require a negative depth.

\begin{theorem}
Consider the estimator \eqref{eq:observer_strategy} for the dynamic system \eqref{eq:system_dynamics_I} under Assumption~\ref{assumption:chi_positive}. The equilibrium point $(\tilde{\mathbf{s}}, \tilde{\chi}) = \mathbf{0}$ is stable and the estimation error converges to zero as $t \rightarrow \infty$ provided that $\forall t \geq t_0$ the following constraints hold simultaneously:
\begin{enumerate}
    \item $J_l\mathbf{w}\leq0$;
    \item $\begin{cases}
J_q\mathbf{v} \leq 0 \text{ , if } \hat{\chi} >  0 \\ 
J_q\mathbf{v} = 0 \text{ , otherwise}
\end{cases}$;
    \item $\sigma^2 = (xv_z - v_x)^2 + (yv_z - v_y)^2 > 0$;
\end{enumerate}
where $\mathbf{v}$, $\mathbf{w}$, and their time-derivatives are bounded signals.
\label{the:theorem_generic}
\end{theorem}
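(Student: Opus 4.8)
The plan is to build a quadratic Lyapunov function in the estimation errors, show it is non-increasing under the three constraints to obtain stability and boundedness, and then upgrade this to asymptotic convergence by Barbalat-type arguments that exploit the excitation condition~3.

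First I would propose the candidate
\[
V = \tfrac{1}{2}\,\tilde{\mathbf{s}}^T\tilde{\mathbf{s}} + \tfrac{1}{2k_\chi}\,\tilde{\chi}^2 ,
\]
which is positive definite and radially unbounded for $k_\chi>0$. Differentiating along the error dynamics \eqref{eq:observer_error_dynamics}, the two coupling terms $\tilde{\mathbf{s}}^T J_v\mathbf{v}\,\tilde{\chi}$ and $\tilde{\chi}\,(J_v\mathbf{v})^T\tilde{\mathbf{s}}$ are the same scalar and cancel exactly, leaving
\[
\dot V = -k_s\|\tilde{\mathbf{s}}\|^2 + \tfrac{1}{k_\chi}\,\tilde{\chi}^2\bigl(J_q\mathbf{v}\,(\chi+\hat{\chi}) + J_l\mathbf{w}\bigr).
\]
This cancellation is precisely the reason the innovation gain in the $\dot{\hat{\chi}}$ equation of \eqref{eq:observer_strategy} is taken as $(J_v\mathbf{v})^T$, and it is what makes the analysis tractable.

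Next I would sign the residual term using the constraints and Assumption~\ref{assumption:chi_positive}. Condition~1 gives $J_l\mathbf{w}\le 0$ directly. For $J_q\mathbf{v}\,(\chi+\hat{\chi})$ I would split on the sign of $\hat{\chi}$: if $\hat{\chi}>0$ then $\chi+\hat{\chi}>0$ because $\chi\ge 0$, and condition~2 makes $J_q\mathbf{v}\le 0$, so the product is nonpositive; if $\hat{\chi}\le 0$ then condition~2 forces $J_q\mathbf{v}=0$ and the product vanishes. Hence the bracket is nonpositive in every case and $\dot V \le -k_s\|\tilde{\mathbf{s}}\|^2\le 0$. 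This already establishes Lyapunov stability of the origin and boundedness of $\tilde{\mathbf{s}}$ and $\tilde{\chi}$; integrating $\dot V$ and using $V\ge 0$ additionally yields $\tilde{\mathbf{s}}\in L_2$.

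Finally, for convergence I would argue in two stages. Since $\tilde{\mathbf{s}}$ is bounded and $\dot{\tilde{\mathbf{s}}}=J_v\mathbf{v}\,\tilde{\chi}-k_s\tilde{\mathbf{s}}$ is bounded (the image coordinates, and hence $J_v$, together with $\mathbf{v},\mathbf{w}$, are bounded), $\tilde{\mathbf{s}}$ is uniformly continuous; with $\tilde{\mathbf{s}}\in L_2$, Barbalat's lemma gives $\tilde{\mathbf{s}}\to 0$. The delicate step, which I expect to be the main obstacle, is recovering $\tilde{\chi}\to 0$, since $\dot V$ carries no negative term in $\tilde{\chi}$; this is exactly where condition~3 enters as a persistency-of-excitation/observability requirement through $\sigma^2=\|J_v\mathbf{v}\|^2$. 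I would differentiate once more: using the bounded derivatives $\dot{\mathbf{v}},\dot{\mathbf{w}}$ and the bounded states, $\ddot{\tilde{\mathbf{s}}}$ stays bounded, so $\dot{\tilde{\mathbf{s}}}$ is uniformly continuous and, because $\tilde{\mathbf{s}}$ converges, $\dot{\tilde{\mathbf{s}}}\to 0$. Then $J_v\mathbf{v}\,\tilde{\chi}=\dot{\tilde{\mathbf{s}}}+k_s\tilde{\mathbf{s}}\to 0$, whence $\sigma^2\,\tilde{\chi}^2\to 0$, and invoking the positivity of $\sigma^2$ from condition~3 forces $\tilde{\chi}\to 0$. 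The subtle points to pin down are the need for a uniform (rather than merely pointwise) lower bound on $\sigma^2$ and the careful verification that the higher derivatives remain bounded, which is precisely where the boundedness hypotheses on $\mathbf{v},\mathbf{w}$ and their time derivatives are consumed.
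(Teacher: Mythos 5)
Your proposal follows essentially the same route as the paper: the same Lyapunov candidate $V=\tfrac12\|\tilde{\mathbf{s}}\|^2+\tfrac{1}{2k_\chi}\tilde{\chi}^2$, the same cancellation of the cross terms, the same case split on $\hat{\chi}$ combined with Assumption~\ref{assumption:chi_positive} to sign $\dot V$, a Barbalat argument for $\tilde{\mathbf{s}}\to 0$, and a second differentiation to get $\dot{\tilde{\mathbf{s}}}\to 0$ and hence $J_v\mathbf{v}\,\tilde{\chi}\to 0$, with condition~3 ruling out the $J_v\mathbf{v}\to 0$ alternative. The only (cosmetic) differences are that you invoke Barbalat via $\tilde{\mathbf{s}}\in L_2$ rather than via boundedness of $\ddot V$, and you explicitly flag the need for a uniform lower bound on $\sigma^2$ — a subtlety the paper itself glosses over.
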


\begin{proof}
Consider the Lyapunov function candidate
\begin{equation}\label{eq:used_lyapunov}
    V(\tilde{\mathbf{s}}, \tilde{\chi}) = \frac{1}{2}\|\tilde{\mathbf{s}}\|^2 + \frac{1}{2k_{\chi}}\tilde{\chi}^2,
\end{equation}
with $k_{\chi} > 0$, and its time-derivative
\begin{align}
    \dot{V} &= \tilde{\mathbf{s}}^T\dot{\tilde{\mathbf{s}}} + \frac{1}{k_{\chi}}\tilde{\chi}\dot{\tilde{\chi}}
    \label{eq:used_lyapunov_timederivative}
\end{align}

Substituting \eqref{eq:observer_error_dynamics} in the previous equation:
\begin{align}
    \dot{V} &= \tilde{\mathbf{s}}^T(J_v\mathbf{v}\tilde{\chi} - k_{s}\tilde{\mathbf{s}}) + \nonumber \\
    & \qquad + \frac{1}{k_{\chi}}\tilde{\chi}(J_q\mathbf{v}(\chi+\hat{\chi})\tilde{\chi}+J_l\mathbf{w}\tilde{\chi} -  k_{\chi}(J_v\mathbf{v})^T\tilde{s})\\
    &= -\tilde{\mathbf{s}}^T k_{s} \tilde{\mathbf{s}} + \frac{1}{k_{\chi}}\tilde{\chi}J_q\mathbf{v}(\chi+\hat{\chi})\tilde{\chi} + \frac{1}{k_{\chi}}\tilde{\chi}J_l\mathbf{w}\tilde{\chi}.
    \label{eq:final_first_the_cons}
\end{align}
By combining Assumption~\ref{assumption:chi_positive} and the input constraints stated in Theorem~\ref{the:theorem_generic}, we have that the three terms in the right-hand side of \eqref{eq:final_first_the_cons} are non-positive. Hence, $\dot{V} \leq 0$ and the equilibrium point $(\tilde{\mathbf{s}}, \tilde{\chi})=\mathbf{0}$ is stable. We also conclude that $V(t)\leq V(t_0)$, and therefore, that the signals $\tilde{\mathbf{s}}$ and $\tilde{\chi}$ are bounded.

The critical case that precludes asserting asymptotically stability from \eqref{eq:final_first_the_cons} occurs when $J_q\mathbf{v}=0$ and $J_l\mathbf{w}=0$, and consequentially, $\dot{V}=-\tilde{\mathbf{s}}^T k_{s} \tilde{\mathbf{s}}$. Let $N(\cdot)$ denote the nullspace of a matrix, then $J_l\mathbf{w}=J_q\mathbf{v}=0$ either because the feature lies in the origin of the image plane ($\mathbf{s} = [0,0]^T$), or because $\mathbf{v} \in N(J_q)$ and $\mathbf{w} \in N(J_l)$ simultaneously.
For $J_q\mathbf{v}=0$ and $J_l\mathbf{w}=0$, the second derivative of the Lyapunov candidate function is
\begin{equation}
    \ddot{V} = -2\tilde{\mathbf{s}}^Tk_s\dot{\tilde{\mathbf{s}}}= -2\tilde{\mathbf{s}}^Tk_s(J_v\mathbf{v}\tilde{\chi} - k_{s}\tilde{\mathbf{s}}).
\end{equation}

As $\tilde{\mathbf{s}}$, $\tilde{\chi}$, and $\mathbf{v}$ (by definition) are bounded, the function $\ddot{V}$ is also bounded. Thus, $\dot{V}$ is uniformly continuous and from Barbalat's Lemma \cite{slotine05}, we have that ${\tilde{\mathbf{s}}} \rightarrow 0$ as $t\rightarrow \infty$. Now, for the asymptotic behaviour of $\tilde{\chi}$ when $J_q\mathbf{v}=0$ and $J_l\mathbf{w}=0$, from  \eqref{eq:observer_error_dynamics} we have, as $t \rightarrow \infty$,
\begin{equation}
    \begin{cases}
    \underset{t \to \infty}{\lim}\dot{\tilde{\mathbf{s}}} =  \underset{t \to \infty}{\lim}J_v\mathbf{v}\tilde{\chi} \\
    \underset{t \to \infty}{\lim}\dot{\tilde{\chi}} =  0
    \end{cases}.
\end{equation}
The second equation states that the depth estimation error becomes a constant, but not necessarily zero. To show that indeed it will converge to zero, we first show that $\dot{\tilde{\mathbf{s}}}$ is uniformly bounded because its time derivative given by
\begin{align}
\ddot{\tilde{\mathbf{s}}} &= \dot{J}_v \mathbf{v}\tilde{\chi} + J_v \dot{\mathbf{v}}\tilde{\chi} + J_v \mathbf{v}\dot{\tilde{\chi}} - k_s\dot{\tilde{\mathbf{s}}} \\
&= \dot{J}_v \mathbf{v}\tilde{\chi} + J_v \dot{\mathbf{v}}\tilde{\chi} - k_{\chi}J_v \mathbf{v}(J_v \mathbf{v})^T\tilde{\mathbf{s}} - k_s J_v \mathbf{v}\tilde{\chi} + k_s^2 \tilde{\mathbf{s}}
\end{align}
is a function of bounded signals. Thus, since $\tilde{\mathbf{s}}$ converges to the origin and $\dot{\tilde{\mathbf{s}}}$ is uniformly bounded, we conclude that $\dot{\tilde{\mathbf{s}}} \rightarrow 0$ as $t \rightarrow \infty$.
Consequently, we have that
\begin{equation}
    \underset{t \to \infty}{\lim}\dot{\tilde{\mathbf{s}}} = \underset{t \to \infty}{\lim}J_v\mathbf{v} \underset{t \to \infty}{\lim}\tilde{\chi} = 0.
\end{equation}

It must be the case that either $\tilde{\chi} \rightarrow 0$ or $J_v\mathbf{v} \rightarrow 0$. If the function $J_v\mathbf{v}$ is persistently exciting through all time, then the depth estimation error converges to zero. The signal $J_v\mathbf{v}$ is persistently exciting if the integral
\begin{equation}
    \int_{t_0}^t(J_v\mathbf{v})^TJ_v\mathbf{v}d\tau
\end{equation}
is positive definite $\forall t \geq t_0$.
Hence, the persistency of excitation (PE) condition holds if 
\begin{equation}
\sigma^2 = (J_v\mathbf{v})^TJ_v\mathbf{v} > 0, 
\label{eq:PE}
\end{equation}
which is the case from condition (3) in Theorem 1.

Thus, one can now conclude that the equilibrium point $(\tilde{\mathbf{s}}^T, \tilde{\chi}) = \mathbf{0}$ is asymptotically stable.
\end{proof}
\section{CONSTRAINED ACTIVE DEPTH ESTIMATION}
\label{sec:single_point_application}
Any vision-based control scheme has to consider an important limitation of image sensors, its limited field of view. While tracking the projected 3D point (related to the unknown depth to be estimated), one needs to make sure the projection does not leave the image space. To achieve that, we have to include constraints on the motion of the camera. This section explores the theoretical stability guarantees derived in Sec.~\ref{sec:convergence_estimator} for active depth estimation, while ensuring the tracked projected point does not leave the image space.

To address the constraints on the camera motion, we introduce the continuous and smooth desired signal $\mathbf{s}_{des}(t)$ and define the tracking error
\begin{equation}
\mathbf{e}(t) = \mathbf{s}(t)-\mathbf{s}_{des}(t).
\end{equation}

The signal $\mathbf{s}_{des}$ is chosen such that the feature remains within the field of view of the camera during the depth estimation process. Assume that the feedback control law $\bm{\pi}(t, \mathbf{s}, \mathbf{s}_{des})$ drives the tracking error to the origin\footnote{For instance, if $\mathbf{s}_{des}$ is constant, then the proportional controller $\bm{\pi} = -k_p(\mathbf{s}-\mathbf{s}_{des})$, where $k_p \in \mathbb{R}^+$ ensures the desired behaviour.}, i.e., $\dot{\mathbf{s}} = \bm{\pi}, \forall t\geq t_0 \implies \mathbf{e}\rightarrow0$ as $t \rightarrow \infty$. From inspection of \eqref{eq:system_dynamics_I}, in addition to the camera's linear and angular velocities, $\dot{\mathbf{s}}$ depends on the unknown depth $\chi$. Thus, it is only possible to shape the dynamics of $\dot{\mathbf{s}}$ up to an estimation error. That being said, the goal is to design a control law for $(\mathbf{v},\mathbf{w})$ such that $\dot{\mathbf{s}}(\hat{\chi}, \mathbf{v},\mathbf{w})$ tracks the signal $\bm{\pi}(t, \mathbf{s}, \mathbf{s}_{des})$,
while:
\begin{enumerate}[(i)]
    \item imposing the constraints stated in Theorem~\ref{the:theorem_generic}, to assure that the stability property holds;
    \item improving the performance of the estimator, by maximizing $\sigma^2$ as defined in \eqref{eq:PE}; and
    \item accounting for the kinodynamics constraints of the camera described by $\|\mathbf{v}\| \leq v_{\max}$ and $\|\mathbf{w}\| \leq w_{\max}$, where $v_{\max}$ and $w_{\max}$ are the maximum linear and angular speed of the camera, respectively.
\end{enumerate}

Since constraints are most commonly not addressed when designing a control law to track the reference signal $\mathbf{s}_{des}$, simultaneously tracking $\bm{\pi}$ and respecting all the forementioned constraints can lead to an infeasible problem. A workaround is proposed by introducing a scale factor $\lambda_{\pi} \in [0,1]$ such that $\dot{\mathbf{s}}(\hat{\chi}, \mathbf{v},\mathbf{w})$ is required to track the reference $\lambda_{\pi}\bm{\pi}$. As the depth converges, tracking the scaled vector $\lambda_{\pi}\bm{\pi}$ -- rather than minimizing a norm error -- ensures that the path of the feature in the image frame follows the assignment specified by $\bm{\pi}$. This allows us to design a path for the feature that does not visit the origin of the image frame. The problem is formulated next:
\begin{equation}
\begin{aligned}
& \underset{\mathbf{v},\mathbf{w}, \lambda_{\pi}}{\text{maximize}}
& &  \lambda_\pi\\
& \text{subject to}
& & J_v\hat{\chi}\mathbf{v} + J_w\mathbf{w} = \lambda_{\pi}\bm{\pi} \\
&&& 0 \leq \lambda_{\pi} \leq 1  \\
&&& \text{constraints  (i), (ii), and (iii)}
\end{aligned}.
\label{problem:vs_1_generic}
\end{equation}
This problem is addressed in two configurations. The estimation strategy proposed in Section~\ref{subsec:non_constant_depth} does not implicitly impose the unknown depth to be constant. In contrast, Sec.~\ref{subsec:constant_depth} addresses the particular case that requires null depth rate. Both cases take advantage of the following Theorem:
\begin{theorem}
\label{thrm:optimization_generic}
Consider the non-convex problem:
\begin{equation}
\begin{aligned}
& \underset{\lambda_1, \lambda_2, \mathbf{v}_r}{\text{maximize}}
& &  \lambda_1\\
& \text{subject to}
& & \lambda_1 \mathbf{v}_1 + \lambda_2 \mathbf{v}_2 = r \mathbf{v}_r   \\
&&& \|\mathbf{v}_r\| = 1  \\
&&& 0 \leq\lambda_{1} \leq 1  \\
&&& -b \leq\lambda_{2} \leq b 
\end{aligned},
\label{eq:optimization_problem_generic}
\end{equation}
where $r,b \in \mathbb{R}^+, \mathbf{v}_1, \mathbf{v}_2 \in \mathbb{R}^n$, $\|\mathbf{v}_1\| > 0$, and $\|\mathbf{v}_2\| = 1$. 
The problem is always feasible if $r \leq b$.
\end{theorem}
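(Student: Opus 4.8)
The plan is to establish \emph{feasibility}, which only requires exhibiting a single point $(\lambda_1, \lambda_2, \mathbf{v}_r)$ that satisfies every constraint; the optimality of $\lambda_1$ plays no role in the claim. The first observation I would make is that, since $r > 0$, the pair of constraints $\lambda_1 \mathbf{v}_1 + \lambda_2 \mathbf{v}_2 = r \mathbf{v}_r$ and $\|\mathbf{v}_r\| = 1$ is equivalent to the single requirement that the left-hand combination have norm exactly $r$, namely $\|\lambda_1 \mathbf{v}_1 + \lambda_2 \mathbf{v}_2\| = r$. Whenever this holds, one simply sets $\mathbf{v}_r = (\lambda_1 \mathbf{v}_1 + \lambda_2 \mathbf{v}_2)/r$, which is well defined and of unit norm. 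Thus the problem reduces to finding $\lambda_1 \in [0,1]$ and $\lambda_2 \in [-b,b]$ that place the combination on the sphere of radius $r$.

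Next I would try the simplest admissible choice, $\lambda_1 = 0$, which collapses the combination to the single term $\lambda_2 \mathbf{v}_2$. Because $\|\mathbf{v}_2\| = 1$, the norm condition becomes $|\lambda_2| = r$, so selecting $\lambda_2 = r$ yields $\|\lambda_2 \mathbf{v}_2\| = r$ as required. It then remains only to verify the box constraint $-b \le \lambda_2 \le b$: since $r \in \mathbb{R}^+$ the lower bound $-b \le r$ is immediate, while the upper bound $r \le b$ is \emph{exactly} the hypothesis. Hence the triple $(\lambda_1, \lambda_2, \mathbf{v}_r) = (0,\, r,\, \mathbf{v}_2)$ satisfies all four constraints and certifies feasibility.

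Since the construction is explicit, there is no genuine analytic obstacle here; the only potential ``trap'' is the temptation to engage both directions $\mathbf{v}_1$ and $\mathbf{v}_2$ and then wrestle with the nonconvex norm equality, whereas zeroing out $\lambda_1$ makes the problem one-dimensional and lets the bound $r \le b$ drop out at once. If one wanted a slightly stronger statement, such as feasibility realized with $\lambda_1 > 0$ strictly, a continuity argument on $f(\lambda_1,\lambda_2) = \|\lambda_1 \mathbf{v}_1 + \lambda_2 \mathbf{v}_2\|$ would work: $f$ is continuous, equals $0$ at the origin, and exceeds $r$ for a suitable admissible $\lambda_2$ (again using $\|\mathbf{v}_2\| = 1$ and $r \le b$), so by the intermediate value theorem it attains the value $r$ somewhere in the admissible box. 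For the feasibility claim as stated, however, the single explicit witness above is sufficient.
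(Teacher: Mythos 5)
Your argument is correct: the explicit witness $(\lambda_1,\lambda_2,\mathbf{v}_r)=(0,\,r,\,\mathbf{v}_2)$ satisfies the equality constraint ($0\cdot\mathbf{v}_1+r\,\mathbf{v}_2=r\,\mathbf{v}_2$), the unit-norm constraint ($\|\mathbf{v}_2\|=1$ by hypothesis), the box constraint on $\lambda_1$, and the box constraint on $\lambda_2$ precisely because $0<r\leq b$; this fully certifies feasibility, which is all the theorem claims. Note that the paper itself does not prove this theorem in-text --- it defers both the proof and a closed-form solution to the cited reference --- so no line-by-line comparison is possible; but your reduction of the two constraints $\lambda_1\mathbf{v}_1+\lambda_2\mathbf{v}_2=r\mathbf{v}_r$, $\|\mathbf{v}_r\|=1$ to the single norm condition $\|\lambda_1\mathbf{v}_1+\lambda_2\mathbf{v}_2\|=r$ (valid since $r>0$) and the subsequent one-dimensional witness is the natural minimal argument. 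One caveat worth being aware of: your witness attains the objective value $\lambda_1=0$, i.e.\ it says nothing about how large $\lambda_1$ can be made, and the paper actually relies on the \emph{optimal} solution of this problem (with $\lambda_1$ as large as possible) in Proposition~\ref{prop:case_A} and its corollary. That stronger content genuinely requires the external reference; for the feasibility statement as written, your proof suffices.
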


Due to the lack of space, the reader is referred to \cite{rodrigues19r} for the proof of Theorem~\ref{thrm:optimization_generic} and a closed form solution for the problem in~\eqref{eq:optimization_problem_generic}, which is employed here. The solution does not impose restrictions on the feature coordinates, except the origin of the image frame, i.e., $\mathbf{s}=[0,0]^T$, which is a singularity. 

\subsection{Case: $\mathbf{s} \neq \mathbf{0}, \forall t \geq t_0$}
\label{subsec:non_constant_depth}
In this first scenario, $J_q\mathbf{v}=0$ and $J_w\mathbf{w}\leq0$. This allows us to to take advantage of Theorem~\ref{thrm:optimization_generic}, while still respecting the requirements for asymptotic convergence stated in Theorem~\ref{the:theorem_generic}. The PE condition of \eqref{eq:PE} simplifies to $\sigma^2 = v_x^2 + v_y^2 = \|\mathbf{v}_{(1:2)}\|^2$ and its maximum attainable value is limited by the kinodynamic constraint of the camera, $\sigma^2_{\max} = v_{\max}$. Under this scenario, the problem in~\eqref{problem:vs_1_generic} can be formulated as
\begin{equation}
\begin{aligned}
& \underset{\mathbf{v},\mathbf{w}, \lambda_{\pi}}{\text{maximize}}
& &  \lambda_\pi\\
& \text{subject to}
& & \dot{\mathbf{s}}(\hat{\chi}, \mathbf{v}, \mathbf{w}) = \lambda_{\pi}\bm{\pi} \\
&&& 0 < \lambda_{\pi} \leq 1 \\
&&& J_q\mathbf{v} = 0\text{, } J_l\mathbf{w} \leq 0 \\
&&&  \|\mathbf{v}\| = v_{\max}\text{, }\|w\| \leq w_{\max}
\label{eq:optimization_problem_II}
\end{aligned},
\end{equation} 
and solved with the following proposition:
\begin{prop}
\label{prop:case_A}
Let the camera control input be 
\begin{equation}
\mathbf{v} = v_{\max} \begin{bmatrix} 
\mathbf{v}_r \\ 0
\end{bmatrix}
\text{ and }
\mathbf{w} = 
\begin{bmatrix} 
S \bm{\lambda}_s / \|\mathbf{s}\| \\ 0
\end{bmatrix},
\label{eq:input_substitution_II}
\end{equation}
and $S$, $J_{\bar{w}}$, and $\bm{\lambda}_{s}$ be defined as follows:
\begin{equation}
\begin{cases}
     S = \begin{bmatrix}
-\frac{\mathbf{s}_\perp}{\|\mathbf{s}_\perp\|} & \frac{\mathbf{s}}{\|\mathbf{s}\|}
\end{bmatrix} \\
     J_{\bar{w}} = \begin{bmatrix}
        xy & -(1+x^2) \\
        1+y^2 & -xy 
    \end{bmatrix} \\
    \bm{\lambda}_{s}= \begin{bmatrix} \lambda_{s_{\perp}} & \lambda_{s}\end{bmatrix}^T
\end{cases},
\label{eq:input_substituiton_matrices}
\end{equation}
where $\lambda_{s_{\perp}} \in \mathbb{R}^+$, $\lambda_{s}\in \mathbb{R}$, and $\mathbf{s}_\perp =[-y,x]^T$ is a vector perpendicular to $\mathbf{s}$. In particular, define $\bm{\lambda}_s$ as
\begin{equation}
\small
\bm{\lambda_{s}}=
    \begin{cases}
    \lambda_w\|\mathbf{s}\|  (J_{\bar{w}}S)^{-1}\bm{\pi} / \|\bm{\pi}\|\text{, if } (\|\bm{\pi}\| - \hat{\chi}v_{\max})\mathbf{s}^T\bm{\pi} < 0  \\
    \lambda_w\|\mathbf{s}\| [0, 1]^T\text{, otherwise} 
    \end{cases}.
\end{equation}

A sub-optimal solution for the problem in~\eqref{eq:optimization_problem_II} can be obtained by casting it in the shape of the problem in~\eqref{eq:optimization_problem_generic}, where the input variables are written as 
\begin{equation}
    \begin{cases}
    \mathbf{v}_1 = \bm{-\pi}\\
    \mathbf{v}_2 =
    \begin{cases}
    \bm{\pi}/\|\bm{\pi}\|\text{, if } (\|\bm{\pi}\| - \hat{\chi}v_{\max})\mathbf{s}^T\bm{\pi} < 0 \\
    \mathbf{s}_{\perp} / \|\mathbf{s}_{\perp}\|\text{, otherwise}
    \end{cases}\\
    r = \hat{\chi}v_{\max}\text{, } b = w_{\max}
    \end{cases},
\end{equation}
and the outputs mapped into
\begin{equation}
    \begin{cases}
    \lambda_{\pi} = \lambda_1^* \text{, } \lambda_{w} = \lambda_2^*; \\
    \mathbf{v}_r = \mathbf{v}_r^* 
    \end{cases}.
\end{equation}

\end{prop}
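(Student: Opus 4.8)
The plan is to verify that the proposed inputs satisfy every constraint of \eqref{eq:optimization_problem_II} and that, after substitution, the remaining degrees of freedom $(\lambda_\pi,\lambda_w,\mathbf{v}_r)$ obey exactly the generic template \eqref{eq:optimization_problem_generic}, so that Theorem~\ref{thrm:optimization_generic} and its closed-form solution apply. First I would dispose of the ``free'' constraints produced by the parametrization \eqref{eq:input_substitution_II}. Since the third entries of both $\mathbf{v}$ and $\mathbf{w}$ are set to zero, $J_q\mathbf{v}=[0,0,1]\mathbf{v}=0$ holds identically, and the column of $J_v$ multiplying $v_z$ drops out, giving $J_v\mathbf{v}=-v_{\max}\mathbf{v}_r$; hence $\|\mathbf{v}\|=v_{\max}\|\mathbf{v}_r\|$, which meets $\|\mathbf{v}\|=v_{\max}$ precisely when $\|\mathbf{v}_r\|=1$, the unit-norm requirement on $\mathbf{v}_r$ in \eqref{eq:optimization_problem_generic}.

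The core step is reducing the tracking equality $\dot{\mathbf{s}}(\hat{\chi},\mathbf{v},\mathbf{w})=\lambda_\pi\bm{\pi}$. Writing $\dot{\mathbf{s}}=J_v\mathbf{v}\hat{\chi}+J_{\bar{w}}S\bm{\lambda}_s/\|\mathbf{s}\|$ (using $w_z=0$, so $J_w\mathbf{w}=J_{\bar{w}}\mathbf{w}_{(1:2)}$), I would show that in each branch of $\bm{\lambda}_s$ the rotational term collapses to $\lambda_w$ times a unit vector. In the ``otherwise'' branch this rests on the identity $J_{\bar{w}}\mathbf{s}=\mathbf{s}_\perp$, which yields $J_w\mathbf{w}=\lambda_w\mathbf{s}_\perp/\|\mathbf{s}_\perp\|$; in the ``if'' branch the factor $(J_{\bar{w}}S)^{-1}$ cancels $J_{\bar{w}}S$, yielding $J_w\mathbf{w}=\lambda_w\bm{\pi}/\|\bm{\pi}\|$. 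In both cases $J_w\mathbf{w}=\lambda_w\mathbf{v}_2$ with $\|\mathbf{v}_2\|=1$, and the equality rearranges to $\lambda_\pi(-\bm{\pi})+\lambda_w\mathbf{v}_2=\hat{\chi}v_{\max}\mathbf{v}_r$, i.e. exactly $\lambda_1\mathbf{v}_1+\lambda_2\mathbf{v}_2=r\mathbf{v}_r$ with $\mathbf{v}_1=-\bm{\pi}$, $r=\hat{\chi}v_{\max}$, $b=w_{\max}$. This is the claimed casting, and Theorem~\ref{thrm:optimization_generic} then supplies $(\lambda_\pi,\lambda_w,\mathbf{v}_r)=(\lambda_1^*,\lambda_2^*,\mathbf{v}_r^*)$.

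Next I would check the two constraints not absorbed into the template. For the angular speed, orthonormality of $S$ gives $\|\mathbf{w}\|=\|\bm{\lambda}_s\|/\|\mathbf{s}\|$; expanding $\bm{\lambda}_s$ in the ``if'' branch over the orthonormal pair $\{\mathbf{s}/\|\mathbf{s}\|,\mathbf{s}_\perp/\|\mathbf{s}_\perp\|\}$ and using $(\mathbf{s}^T\bm{\pi})^2+(\mathbf{s}_\perp^T\bm{\pi})^2=\|\mathbf{s}\|^2\|\bm{\pi}\|^2$ together with $1+\|\mathbf{s}\|^2\ge 1$ yields $\|\mathbf{w}\|\le|\lambda_w|\le w_{\max}$; in the ``otherwise'' branch $\|\mathbf{w}\|=|\lambda_w|$ exactly. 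This inequality is what makes the solution \emph{sub-optimal}: feasibility of $\|\mathbf{w}\|\le w_{\max}$ is guaranteed, but the bound $b=w_{\max}$ on $\lambda_2$ may be conservative. Finally, $J_l\mathbf{w}$ reduces to the first component $\lambda_{s_\perp}$ of $\bm{\lambda}_s$; it is $0$ in the ``otherwise'' branch and equals $\lambda_w\,\mathbf{s}^T\bm{\pi}/\big(\|\bm{\pi}\|(1+\|\mathbf{s}\|^2)\big)$ in the ``if'' branch.

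The hard part is the last verification, $J_l\mathbf{w}\le 0$, in the ``if'' branch, which reduces to showing $\lambda_w\,\mathbf{s}^T\bm{\pi}\le 0$. Here $\mathbf{v}_2=\bm{\pi}/\|\bm{\pi}\|=-\mathbf{v}_1/\|\mathbf{v}_1\|$, so the template degenerates to the collinear (one-dimensional) configuration and the sign of the optimizer $\lambda_w=\lambda_2^*$ must be read off the closed-form solution of \cite{rodrigues19r}. The branch-selection test $(\|\bm{\pi}\|-\hat{\chi}v_{\max})\mathbf{s}^T\bm{\pi}<0$ is precisely the condition that pins this sign so that $\lambda_w$ and $\mathbf{s}^T\bm{\pi}$ are of opposite sign; tying the sign of $\lambda_2^*$ to that test is the step I expect to require the most care, and it is where the explicit solution of \cite{rodrigues19r} is indispensable. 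The remaining edge cases — $\mathbf{s}\neq\mathbf{0}$ (the stated singularity), $\bm{\pi}\neq\mathbf{0}$ so that $\|\mathbf{v}_1\|>0$, and $\hat{\chi}>0$ so that $r\in\mathbb{R}^+$ as Theorem~\ref{thrm:optimization_generic} requires — I would record as standing hypotheses.
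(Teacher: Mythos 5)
Your proposal is correct in substance and reaches the same reduction as the paper, but it is organized in the opposite direction. The paper \emph{derives} the two-branch formula for $\bm{\lambda}_s$: it first argues that maximizing $\lambda_\pi$ in the triangle equality $\lambda_\pi(-\bm{\pi})+\bm{\nu}=\hat{\chi}v_{\max}\mathbf{v}_r$ requires $\bm{\nu}=(1/\|\mathbf{s}\|)J_{\bar{w}}S\bm{\lambda}_s$ to be aligned with $\bm{\pi}$ (sign fixed by whether $\|\bm{\pi}\|\gtrless\hat{\chi}v_{\max}$), giving $\bm{\lambda}_s\propto(J_{\bar{w}}S)^{-1}\bm{\pi}$, and then checks feasibility of the slack $\lambda_{s_\perp}\le 0$; the test $(\|\bm{\pi}\|-\hat{\chi}v_{\max})\mathbf{s}^T\bm{\pi}<0$ falls out as exactly the condition under which the aligned choice is feasible, with the $[0,1]^T$ branch as the fallback $\lambda_{s_\perp}=0$. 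You instead \emph{verify} the stated inputs constraint by constraint; your algebra is right throughout (the identity $J_{\bar{w}}\mathbf{s}=\mathbf{s}_\perp$, the cancellation $(J_{\bar{w}}S)(J_{\bar{w}}S)^{-1}$, the reduction $J_l\mathbf{w}=\lambda_{s_\perp}=\lambda_w\mathbf{s}^T\bm{\pi}/(\|\bm{\pi}\|(1+\|\mathbf{s}\|^2))$, and the orthonormal-expansion bound $\|\mathbf{w}\|\le|\lambda_w|$, which is an explicit version of the paper's singular-value argument). The one step you leave open --- that under the branch test $\lambda_2^*$ and $\mathbf{s}^T\bm{\pi}$ have opposite signs, so $J_l\mathbf{w}\le 0$ --- is precisely where the paper's derivational order pays off: there the sign of $\lambda_w$ is fixed a priori by the alignment requirement ($\lambda_w>0$ iff $\|\bm{\pi}\|>\hat{\chi}v_{\max}$), and the branch test is then literally the statement $\lambda_w\mathbf{s}^T\bm{\pi}<0$. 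You correctly identify this as the crux and your proposed resolution (reading the sign of $\lambda_2^*$ off the collinear degenerate case of Theorem~\ref{thrm:optimization_generic}) does work, since $\lambda_2^*=\lambda_1^*\|\bm{\pi}\|-r$ has the sign of $\|\bm{\pi}\|-\hat{\chi}v_{\max}$; but as written it defers to the closed form of \cite{rodrigues19r} rather than closing the argument, so you should either carry out that one-dimensional computation or adopt the paper's ordering. A minor difference of emphasis: you attribute the sub-optimality to the conservative bound $|\lambda_w|\le w_{\max}$ versus $\|\mathbf{w}\|\le w_{\max}$, whereas the paper attributes it to forcing $\bm{\nu}$ parallel to $\bm{\pi}$ despite the shear $J_{\bar{w}}S$; both are genuine sources and neither affects correctness of the proposition.
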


\begin{proof} 
First, we show that the control inputs are described as in \eqref{eq:input_substitution_II}. The constraint $J_q\mathbf{v} = 0$ implies that $v_z = 0$. Combining with $\|\mathbf{v}\| = v_{\max}$, the linear velocity vector can be written as $\mathbf{v} = v_{\max}[\mathbf{v}_r^T,  0]^T$, where ${\mathbf{v}_r} \in \mathbb{R}^2$ is a unit vector. For the angular velocity, re-write the constraint $ J_l\mathbf{w} \leq 0 $ using the slack variable $\lambda_{s_{\perp}}$, such that
\begin{equation}
    J_l\mathbf{w} \leq 0 \implies 
    \begin{cases}
    J_l\mathbf{w} &= \lambda_{s_{\perp}} \\
    \lambda_{s_{\perp}} &\leq 0
    \end{cases}.
\end{equation}
From $J_l\mathbf{w} = \lambda_{s_{\perp}}$ one concludes that $w_y = (y/x)w_x - (1/x)\lambda_{s_{\perp}}$. Applying this result into $J_w\mathbf{w}$: 
    \begin{align}
    J_w\mathbf{w} &= 
    J_w\begin{bmatrix}
    w_x \\ (y/x)w_x - (1/x)\lambda_{s_{\perp}}\\w_z
    \end{bmatrix} \\
    &=
    \begin{bmatrix}
    -y/x & y \\
    1 & - x
    \end{bmatrix}
    \begin{bmatrix}
    w_x \\ w_z
    \end{bmatrix} +
    \begin{bmatrix}
    (1/x+x) \\ y
    \end{bmatrix} \lambda_{s_{\perp}}.
    \end{align}
The column space of the first matrix on the right hand side of the previous equation has dimension 1 and, consequentially, it can be generated assuming $w_z =0$. Thus, the following equivalence holds:
\begin{equation}
    J_l\mathbf{w} = \lambda_{s_\perp} \implies -\mathbf{s}_{\perp}^T\mathbf{w}_{(1:2)} = \lambda_{s_\perp},
\end{equation}
where $\mathbf{s}_{\perp} = [-y, x]^T$. For $w_z=0$, we conclude that any feasible angular velocity can be described as
\begin{align}
\mathbf{w}_{(1:2)}&=-\frac{\mathbf{s}_\perp}{\|\mathbf{s}_\perp\| ^2}\lambda_{s_{\perp}} + \frac{\mathbf{s}}{\|\mathbf{s}\|^2}\lambda_{s}\\
&=\frac{1}{\|\mathbf{s}\|}\begin{bmatrix}
-\frac{\mathbf{s}_\perp}{\|\mathbf{s}_\perp\|} & \frac{\mathbf{s}}{\|\mathbf{s}\|}
\end{bmatrix}
\begin{bmatrix}
\lambda_{s_{\perp}} \\ \lambda_{s}
\end{bmatrix} \\
&=\frac{1}{\|\mathbf{s}\|}S \bm{\lambda}_s,
\end{align} 
where $S$, $\bm{\lambda}_s$, and $\lambda_{s}$ are as defined in \eqref{eq:input_substituiton_matrices}.
 Within this setup the kinodynamics constraint $\|\mathbf{w}\| \leq w_{\max}$ is equivalent to $\|\bm{\lambda}_s\|  \leq \|\mathbf{s}\| w_{\max}$:
\begin{align}
   \|\mathbf{w}\| = \|\mathbf{w}_{(1:2)}\| &= \frac{1}{\|\mathbf{s}\|}\sqrt{ \bm{\lambda}_s^TS^TS\bm{\lambda}_s} \\
    &= \frac{\|\bm{\lambda}_s\|}{\|\mathbf{s}\|} \leq w_{\max}.
    \label{eq:alternative_kinodynamic_constraint}
\end{align}
This concludes the proof of \eqref{eq:input_substitution_II} and \eqref{eq:input_substituiton_matrices}.

Applying the control inputs into the first constraint of \eqref{eq:optimization_problem_II} and re-organizing the terms yields:
\begin{equation}
     \lambda_\pi(-\bm{\pi}) + J_w 
     \begin{bmatrix}
     S \bm{\lambda}_s / \|\mathbf{s}\| \\
     0
     \end{bmatrix}  = 
     -\hat{\chi}v_{\max}J_v
     \begin{bmatrix}
     {\mathbf{v}_r} \\
     0
     \end{bmatrix}
\end{equation}
\begin{equation}
     \lambda_{\pi} (-\bm{\pi})+\frac{1}{\|\mathbf{s}\|}J_{\bar{w}}S\bm{\lambda}_s =\hat{\chi}v_{\max}{\mathbf{v}_r}.
     \label{eq:triangle_equality_III}
\end{equation}
Let $\bm{\nu} = (1/\|\mathbf{s}\|)J_{\bar{w}}S\bm{\lambda}_s$ and notice that if $\|\bm{\pi}\| > \hat{\chi}v_{\max}$, $\bm{\lambda}_s$ must be such that $\bm{\pi}^T\bm{\nu} > 0$. On the contrary, if $\|\bm{\pi}\| < \hat{\chi}v_{\max}$, then one has to ensure $(-\bm{\pi})^T\bm{\nu} > 0$. Maximizing the dot product in both cases requires that $\bm{\nu}$ and $\bm{\pi}$ to be parallel. Both vectors are aligned if 
\begin{equation}
\bm{\lambda}_s \propto
(J_{\bar{w}}S)^{-1}\bm{\pi},
\label{eq:candidate_lambda_s}
\end{equation}
where the symbol $\propto$ denotes the relationship holds up to a scale factor. The matrix $S$ is orthogonal and, therefore, full rank. The matrix $J_{\bar{w}}$ is also full rank since $\det(J_{\bar{w}}) = 1 + x^2 + y^2 \neq 0$. From the Sylvester rank inequality, we have
\begin{equation}
    \rank(S) + \rank(J_{\bar{w}}) - 2 \leq \rank(J_{\bar{w}}S).
\end{equation}
Since both $S$ and $J_{\bar{w}}$ are $2\times2$ full rank  matrices, one concludes that their product is also full rank (and invertible). 

For feasibility, the first component of $\bm{\lambda}_s$ -- corresponding to $\lambda_{s_{\perp}}$ -- must be non-positive. Solving the right hand side of \eqref{eq:candidate_lambda_s}, $\lambda_{s_{\perp}}$ can be described  as 
\begin{equation}
    \lambda_{s,\perp} \propto
    \begin{cases}
    \mathbf{s} ^T\bm{\pi}\text{, if } \|\bm{\pi}\| > \hat{\chi}v_{\max} \\
    -\mathbf{s} ^T\bm{\pi}\text{, if } \|\bm{\pi}\| \leq \hat{\chi}v_{\max}
    \end{cases} .
    \label{eq:result_s_perp}
\end{equation}
If $\lambda_{s,\perp}$ is positive in either cases, it means that $\lambda_{s_{\perp}}=0$ is the largest feasible value that maximizes the projection of $\bm{\nu}$ into $\bm{\pi}$ or $(-\bm{\pi})$. Using a compact notation:
\begin{equation}
    \bm{\lambda}_{s}=
    \begin{cases}
    \lambda_w\|\mathbf{s}\|  (J_{\bar{w}}S)^{-1}\frac{\bm{\pi}}{\|\bm{\pi}\|}\text{, if } (\|\bm{\pi}\| - \hat{\chi}v_{\max})\mathbf{s}^T\bm{\pi} < 0  \\
    \lambda_w\|\mathbf{s}\|[0, 1]^T\text{, otherwise} 
    \label{eq:suboptimal_lambda_s}
    \end{cases},
\end{equation}
where $\lambda_w \in \mathbb{R}$. For the maximum feasible value of $\lambda_w$, compute the norm of the previous equation and compare with \eqref{eq:alternative_kinodynamic_constraint}. When $(\|\bm{\pi}\| - \hat{\chi}v_{\max})\mathbf{s}^T\bm{\pi} > 0$, we have
\begin{equation}
    \|\bm{\lambda}_s\| = \frac{\|\lambda_w\|\|\mathbf{s}\|}{\|\bm{\pi}\|}   \|(J_{\bar{w}}S)^{-1}\bm{\pi}\| \leq \|\mathbf{s}\|w_{\max}. 
\end{equation}
The singular values of $(J_{\bar{w}}S)^{-1}$ are 1 and $1/(1+x^2+y^2)$. Since the maximum singular value is 1, the upper bound $\|(J_{\bar{w}}S)^{-1}\bm{\pi}\| \leq \|\bm{\pi}\|$ holds and
\begin{align}
    \|\bm{\lambda}_s\| \leq \|\lambda_w\|\|\mathbf{s}\| &\leq \|\mathbf{s}\|w_{\max}, \\
    \|\lambda_w\| &\leq w_{\max}.
\end{align}

The same bound is obtained when $\bm{\lambda}_{s} = \lambda_w\|\mathbf{s}\|\begin{bmatrix}0 & 1\end{bmatrix}^T$ in \eqref{eq:suboptimal_lambda_s}:
\begin{equation}
    \|\lambda_w\|\mathbf{s}\|\begin{bmatrix} 0 & 1 \end{bmatrix}^T\| \leq \|\mathbf{s}\|w_{\max} \Rightarrow \|\lambda_w\| \leq w_{\max}.
\end{equation}

Finally, substituting \eqref{eq:suboptimal_lambda_s} in \eqref{eq:triangle_equality_III}:
\begin{equation}
\small
\hat{\chi}v_{\max}{\mathbf{v}_r} =
\begin{cases}
     \lambda_{\pi} (-\bm{\pi})+\lambda_w\frac{\bm{\pi}}{\|\bm{\pi}\|}\text{, if } (\|\bm{\pi}\| - \hat{\chi}v_{\max})\mathbf{s}^T\bm{\pi} > 0 \\
     \lambda_{\pi} (-\bm{\pi})+\lambda_w\frac{\mathbf{s}_{\perp}}{\|\mathbf{s}_{\perp}\|}\text{, otherwise}   
\end{cases},
\end{equation}
which allows us to obtain a sub-optimal solution for the problem in~\eqref{eq:optimization_problem_I} in the shape of the problem in~\eqref{eq:optimization_problem_generic} using the substitutions described by \eqref{eq:input_sfm} and \eqref{eq:output_sfm}.
\end{proof}

The sub-optimality comes from the fact that the solution consists in projecting $\bm{\lambda}_s$ into $\bm{\pi}$ when $(\|\bm{\pi}\| - \hat{\chi}v_{\max})\mathbf{s}^T\bm{\pi} < 0$. The projection is done via the mapping $J_{\bar{w}}S$. The singular values of $J_{\bar{w}}S$ are $1$ and $1+x^2+y^2$. Therefore, if $\mathbf{s}\neq[0,0]^T$, there can exist a $\bm{\lambda}_s$ that is not projected into $\bm{\pi}$, but the shear transformation performed by $J_{\bar{w}}S$ allows for a higher value of $\lambda_{\pi}$. Since in practical applications $1+x^2+y^2 \approx 1$, the solution obtained is not far from the optimal solution. The main advantage in our approach is that it is possible to compute a direction for $\bm{\lambda}_s$ in a closed-form.
\subsection{Case: $\mathbf{s} \neq \mathbf{0}$ and $\dot{\chi}=0, \forall t \geq t_0$}
\label{subsec:constant_depth}
Now, consider the specific scenario where the depth must be kept constant throughout the entire estimation process. For an unknown $\chi$ in  \eqref{eq:system_dynamics_I}, setting $J_q\mathbf{v}=0$ and $J_l\mathbf{w}=0$ guarantees that $\dot{\chi}=0$. Both aforementioned constraints are in accordance with Theorem~\ref{the:theorem_generic}. The problem, which is stated next:
\begin{equation}
\begin{aligned}
& \underset{\mathbf{v},\mathbf{w}, \lambda_{\pi}}{\text{maximize}}
& &  \lambda_\pi\\
& \text{subject to}
& & \dot{\mathbf{s}}(\hat{\chi}, \mathbf{v}, \mathbf{w}) = \lambda_{\pi}\bm{\pi} \\
&&& 0 \leq \lambda_{\pi} \leq 1 \\
&&& J_q\mathbf{v} = 0\text{, } J_l\mathbf{w} = 0 \\
&&&  \|\mathbf{v}\| = v_{\max}\text{, }\|w\| \leq w_{\max}
\label{eq:optimization_problem_I}
\end{aligned},
\end{equation} 
is a particular case of problem~\eqref{eq:optimization_problem_II}. According to the following corollary, an optimal solution can be obtained using Theorem~\ref{thrm:optimization_generic}.
\begin{corollary}
Let the camera control input be described as
\begin{equation}
\mathbf{v} = v_{\max} \begin{bmatrix} 
{\mathbf{v}_r} \\ 0
\end{bmatrix}
\text{ and }
\mathbf{w} = \lambda_w \begin{bmatrix} 
\mathbf{s} / \|\mathbf{s}\| \\ 0
\end{bmatrix}.
\label{eq:input_substitution_I}
\end{equation}
Then, the problem in~\eqref{eq:optimization_problem_I} is equivalent to the problem in~\eqref{eq:optimization_problem_generic}, where 
\begin{equation}
    \begin{cases}
    \mathbf{v}_1 = \bm{-\pi}\text{, } \mathbf{v}_2 = \mathbf{s}_{\perp} / \|\mathbf{s}_{\perp}\| \\
    r = \hat{\chi}v_{\max}\text{, } b = w_{\max}
    \end{cases};
    \label{eq:input_sfm}
\end{equation}
and the outputs are mapped as:
\begin{equation}
    \begin{cases}
    \lambda_{\pi} = \lambda_1^* \text{, } \lambda_{w} = \lambda_2^* \\
    \mathbf{v}_r = \mathbf{v}_r^*
    \end{cases} .
    \label{eq:output_sfm}
\end{equation}
\end{corollary}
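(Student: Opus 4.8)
The plan is to treat this corollary as the specialization of Proposition~\ref{prop:case_A} obtained by replacing the inequality $J_l\mathbf{w}\leq 0$ with the equality $J_l\mathbf{w}=0$, i.e., by forcing the slack variable $\lambda_{s_\perp}$ introduced there to be exactly zero, and then showing that the resulting problem matches \eqref{eq:optimization_problem_generic} term by term. First I would fix the shape of the admissible inputs. As in the proof of Proposition~\ref{prop:case_A}, the constraint $J_q\mathbf{v}=0$ forces $v_z=0$, so together with $\|\mathbf{v}\|=v_{\max}$ the linear velocity can be written as $\mathbf{v}=v_{\max}[\mathbf{v}_r^T,0]^T$ with $\|\mathbf{v}_r\|=1$. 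For the angular velocity I would reuse the decomposition $\mathbf{w}_{(1:2)} = -\mathbf{s}_\perp\lambda_{s_\perp}/\|\mathbf{s}_\perp\|^2 + \mathbf{s}\lambda_s/\|\mathbf{s}\|^2$ together with the column-space argument that permits taking $w_z=0$; imposing $J_l\mathbf{w}=0$ sets $\lambda_{s_\perp}=0$ and leaves $\mathbf{w}_{(1:2)}=\mathbf{s}\lambda_s/\|\mathbf{s}\|^2$, which after the reparametrization $\lambda_w:=\lambda_s/\|\mathbf{s}\|$ is exactly the form \eqref{eq:input_substitution_I}.

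Next I would substitute these inputs into the tracking constraint $J_v\hat{\chi}\mathbf{v}+J_w\mathbf{w}=\lambda_\pi\bm{\pi}$ of \eqref{eq:optimization_problem_I}. Using $J_v$ from \eqref{eq:system_matrices} with $v_z=0$ gives $J_v\hat{\chi}\mathbf{v}=-\hat{\chi} v_{\max}\mathbf{v}_r$, and with $w_z=0$ the angular term reduces to $J_w\mathbf{w}=\lambda_w J_{\bar{w}}\mathbf{s}/\|\mathbf{s}\|$. A short computation yields $J_{\bar{w}}\mathbf{s}=[-y,x]^T=\mathbf{s}_\perp$, and since $\|\mathbf{s}_\perp\|=\|\mathbf{s}\|$ this is $J_w\mathbf{w}=\lambda_w\mathbf{s}_\perp/\|\mathbf{s}_\perp\|$. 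Rearranging the constraint then produces
\begin{equation}
\lambda_\pi(-\bm{\pi})+\lambda_w\frac{\mathbf{s}_\perp}{\|\mathbf{s}_\perp\|}=\hat{\chi} v_{\max}\mathbf{v}_r,
\end{equation}
which is precisely the equality constraint of \eqref{eq:optimization_problem_generic} under the identification \eqref{eq:input_sfm}, with $\mathbf{v}_1=-\bm{\pi}$, $\mathbf{v}_2=\mathbf{s}_\perp/\|\mathbf{s}_\perp\|$, $r=\hat{\chi} v_{\max}$, $b=w_{\max}$, $\lambda_1=\lambda_\pi$, $\lambda_2=\lambda_w$, and the unit vector $\mathbf{v}_r$.

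It then remains to check that the side constraints transfer without distortion. The bound $0\leq\lambda_\pi\leq 1$ is identical to $0\leq\lambda_1\leq 1$; the requirements $\|\mathbf{v}_1\|=\|\bm{\pi}\|>0$ and $\|\mathbf{v}_2\|=1$ hold by construction; and because $\mathbf{w}_{(1:2)}=\lambda_w\mathbf{s}/\|\mathbf{s}\|$ with $w_z=0$ one has $\|\mathbf{w}\|=|\lambda_w|$, so $\|\mathbf{w}\|\leq w_{\max}$ is exactly $-b\leq\lambda_2\leq b$. Hence \eqref{eq:optimization_problem_I} and \eqref{eq:optimization_problem_generic} are equivalent, and the closed-form maximizer furnished by Theorem~\ref{thrm:optimization_generic} transports back through \eqref{eq:output_sfm} to an optimal solution of \eqref{eq:optimization_problem_I}.

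The step I expect to require the most care is arguing that the reduction is lossless, so that the word \emph{optimal} is justified rather than merely \emph{sub-optimal} as in Proposition~\ref{prop:case_A}. The sub-optimality there stemmed from the shear $J_{\bar{w}}S$ acting on a two-dimensional $\bm{\lambda}_s$, which forced a projection of $\bm{\lambda}_s$ onto $\bm{\pi}$. Forcing $\lambda_{s_\perp}=0$ collapses $\bm{\lambda}_s$ to a single scalar, so the attainable set for $J_w\mathbf{w}$ is the one-dimensional line spanned by $\mathbf{s}_\perp$, parametrized bijectively (and isometrically, since $\|\mathbf{s}_\perp\|=\|\mathbf{s}\|$) by $\lambda_w$. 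I would make this bijection explicit to confirm that no feasible angular velocity is discarded and no spurious one is introduced, which is what upgrades the sub-optimal guarantee of Proposition~\ref{prop:case_A} to the exact equivalence claimed by the corollary.
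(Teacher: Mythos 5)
Your proposal is correct and follows essentially the same route as the paper, which simply states that the corollary is the proof of Proposition~\ref{prop:case_A} with the slack variable $\lambda_{s_\perp}$ forced to zero; your explicit computations ($J_{\bar{w}}\mathbf{s}=\mathbf{s}_\perp$, $\|\mathbf{w}\|=|\lambda_w|$) and the lossless-reduction argument for why the result is \emph{optimal} rather than sub-optimal match the paper's one-line justification that ``the shear mapping is not involved.''
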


The proof is similar to the one presented in Sec.~\ref{subsec:non_constant_depth} by imposing $\lambda_{s_{\perp}} = 0$, that is, no slackness. In this case, the solution is optimal because the shear mapping is not involved.
\section{EXPERIMENTS}\label{sec:exp}

The theoretical results derived in this work are validated using a numerical simulator. The following fixed parameters were employed: $v_{\max} = 0.1$~m/s, $w_{\max} = 0.15$~rad/s, $k_s=10$, and $k_{\chi}=2500$. The sampling time of the simulations is $0.05$~ms. In  Fig.~\ref{fig:results_comparison_01}, we compare the methods proposed in Sec.~\ref{subsec:non_constant_depth} and Sec.~\ref{subsec:constant_depth} with the one presented in  \cite{spica13,spica14}. For asymptotic stability, the strategy described in \cite{spica13,spica14} (continuous red line) and denoted here as \textit{Spica et al. (2014)}, requires the projection of the tracked 3D point to lie in the origin of the image plane and its corresponding depth to be constant, i.e., $\mathbf{s}_{des} = [0,0]^T$ and $\dot{\chi}=0$. The method presented in Sec.~\ref{subsec:non_constant_depth} (dashed green line) relaxes both requirements. The strategy described in Sec.~\ref{subsec:constant_depth} (continuous blue line) is a particular case of the previous method which keeps the unknown depth constant throughout the trajectory of the camera. Aiming at a fair comparison, the initial visual servoing error and the inverse depth estimation error are the same in the three cases. The initial configurations are: $\|\mathbf{e}(t_0)\| = 0.2~\text{m}$ and $\tilde{\chi}(t_0) = 0.9~\text{m}^{-1}$ (with ${\chi}(t_0) = 1~\text{m}^{-1}$ and $\hat{\chi}(t_0) = 0.1~\text{m}^{-1}$).

\begin{figure}
    \centering
        \includegraphics[width=.495\textwidth]{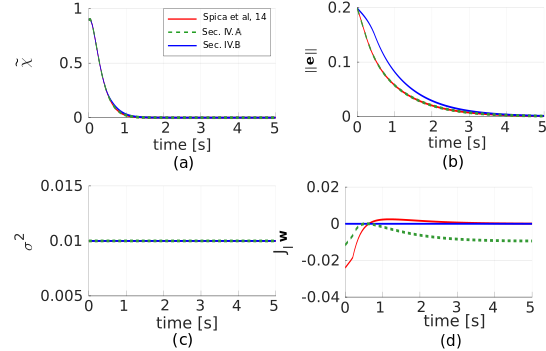}
    \caption{Comparison of the estimation strategies described in \cite{spica14} (\textit{Spica et al. 14}), Sec.~\ref{subsec:non_constant_depth} ($\dot{\chi}=0$ relaxed), and Sec.~\ref{subsec:constant_depth} ($\dot{\chi}=0$) . The initial inverse depth estimation error is $\tilde{\chi}=0.9~\text{m}^{-1}$ and the initial tracking error is $\|\mathbf{e}\|=0.2~\text{m}$. From top to bottom, it is shown the results of (a) the inverse depth estimation error, (b) the tracking error, (c) the persistence of excitation measurement $\sigma^2$, and (d) the constraint $J_l\mathbf{w}$ described in Theorem~\ref{the:theorem_generic}.}
    \label{fig:results_comparison_01}
\end{figure}

The behaviour of the depth estimation error is almost the same for the three methods - see Fig.~\ref{fig:results_comparison_01}(a). In fact, as shown in Fig.~\ref{fig:results_comparison_01}(c), the three strategies continuously fulfill the PE condition, given by $\sigma^2$, at its maximum value. Figure~\ref{fig:results_comparison_01}(b) shows that the feature tracking error converges slower for the method described in Sec.~\ref{subsec:constant_depth}. This is because the constraint $J_l\mathbf{w}=0$ imposes severe limitations on the the angular velocity vector. \textit{Spica et al. (2014)} guarantees asymptotic stability by driving the feature to the origin of the image frame, while the strategies proposed in this paper ensure that the constraints described in Theorem~\ref{the:theorem_generic} hold throughout the entire estimation process regardless of the feature coordinate. In particular, the constraint associated to $J_l\mathbf{w}$ can be seen in Fig.~\ref{fig:results_comparison_01}(d). For the method in Sec.~\ref{subsec:non_constant_depth}, $J_l\mathbf{w}$ is smaller or equal to zero. For the method in Sec.~\ref{subsec:constant_depth}, the constraint is always zero.

\begin{figure}
    \centering
    \includegraphics[width=.45\textwidth]{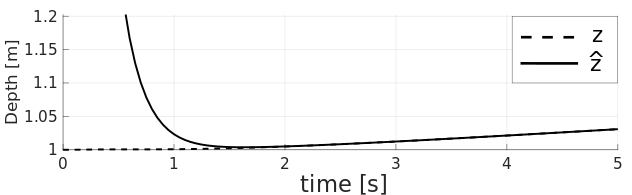}
    \caption{True depth ($z=1/\chi$) and its estimation ($\hat{z}=1/\hat{\chi}$) using the strategy described in Sec.~\ref{subsec:non_constant_depth} and the same setup as in Fig.~\ref{fig:results_comparison_01}}
    \label{fig:depth_change}
\end{figure}

For the same scenario, Fig.~\ref{fig:depth_change} shows the ground truth and the depth estimation using the method in Sec.~\ref{subsec:non_constant_depth}. In contrast to other continuous estimation strategies presented in the literature (namely \cite{spica14}), the method proposed in Sec.~\ref{subsec:non_constant_depth} ensures the depth estimation error converges to zero even thought the depth of the point with respect to the camera is not constant throughout the entire estimation process. 

In our formulation, the desired feature coordinate $\mathbf{s}_{des}$ can be time-varying. Figure~\ref{fig:s_des_time_varying} shows a scenario where the goal is to have the projection of the feature moving in a circular pattern. More specifically, we define $\mathbf{s}_{des} = 0.1[\cos(2\pi/10t), \sin(2\pi/10t)]^T$. As shown in Fig.~\ref{fig:s_des_time_varying}(a), the speed of convergence of the depth estimation error does not change when compared to the previous case (constant $\mathbf{s}_{des}$). Finally, Fig.~\ref{fig:s_des_time_varying}(b) and (c) show that while the depth estimation converges, the proposed control law is able to follow the time-varying signal $\mathbf{s}_{des}$.

\begin{figure}
    \centering
    \includegraphics[width=.49\textwidth]{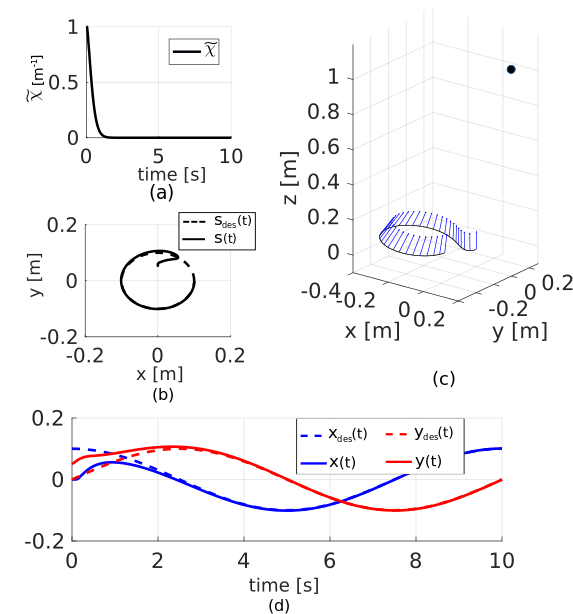}
    \caption{Assessing the performance of the proposed depth estimation framework when the desired feature coordinates ($\mathbf{s}_{des}(t)$) is time-varying. (a) shows the depth estimation error, (b) shows the desired and the current projection of the 3D point in the image plane, (c) illustrates the trajectory of the camera in a black line, the $z$--axis in a blue arrow, and the 3D point in black, and (d) the two previous signals over time per axis.}
    \label{fig:s_des_time_varying}
\end{figure}
\section{CONCLUSIONS}\label{sec:conclusions}
In this paper we analyze the required conditions for asymptotic stability of a class of depth estimation observers when the control inputs of the camera can be computed in an active manner. We applied the results for the depth estimation of a single 3D point. In contrast to previous works, our framework guarantees asymptotic stability when the feature coordinate does not converge to the origin of the image frame, nor its depth with respect to the camera is necessarily constant. We believe that relaxing the feature coordinates within the image frame while still providing asymptotic stability guarantees is paramount to apply incremental depth estimation in multiple point scenarios. Despite the relaxed constraints that allow a larger set of motions with theoretical guarantees for depth estimation, the numerical simulations shows that the proposed strategy performs similarly to related literature methods. In future work, we will extend our framework to multiple point and performs tests with a real robot/camera setup.

\bibliographystyle{IEEEtran}
\bibliography{files/sacrpe}

\begin{thebibliography}{10}
\providecommand{\url}[1]{#1}
\csname url@rmstyle\endcsname
\providecommand{\newblock}{\relax}
\providecommand{\bibinfo}[2]{#2}
\providecommand\BIBentrySTDinterwordspacing{\spaceskip=0pt\relax}
\providecommand\BIBentryALTinterwordstretchfactor{4}
\providecommand\BIBentryALTinterwordspacing{\spaceskip=\fontdimen2\font plus
\BIBentryALTinterwordstretchfactor\fontdimen3\font minus
  \fontdimen4\font\relax}
\providecommand\BIBforeignlanguage[2]{{%
\expandafter\ifx\csname l@#1\endcsname\relax
\typeout{** WARNING: IEEEtran.bst: No hyphenation pattern has been}%
\typeout{** loaded for the language `#1'. Using the pattern for}%
\typeout{** the default language instead.}%
\else
\language=\csname l@#1\endcsname
\fi
#2}}

\bibitem{olson02}
C.~F. Olson, L.~H. Matthies, M.~Schoppers, and M.~W. Maimone, ``Rover
  navigation using stereo ego-motion,'' \emph{Robotics and Autonomous Systems
  (RAS)}, vol.~43, no.~4, pp. 215--229, 2003.

\bibitem{li08}
N.~H.~M. {Li} and H.~H.~T. {Liu}, ``Formation uav flight control using virtual
  structure and motion synchronization,'' in \emph{American Control Conf.
  (ACC)}, 2008, pp. 1782--1787.

\bibitem{andreff01}
N.~Andreff, R.~Horaud, and B.~Espiau, ``Robot hand-eye calibration using
  structure-from-motion,'' \emph{The International Journal of Robotics Research
  (IJRR)}, vol.~20, no.~3, pp. 228--248, 2001.

\bibitem{clapuyt16}
F.~Clapuyt, V.~Vanacker, and K.~V. Oost, ``Reproducibility of uav-based earth
  topography reconstructions based on structure-from-motion algorithms,''
  \emph{Geomorphology}, vol. 260, pp. 4--15, 2016.

\bibitem{romulo19}
R.~T. Rodrigues, P.~Miraldo, D.~V. Dimarogonas, and A.~P. Aguiar, ``A framework
  for depth estimation and relative localization of ground robots using
  computer vision,'' in \emph{IEEE/RSJ Int'l Conf. Intelligent Robots and
  Systems (IROS)}, 2019.

\bibitem{koenderink91}
J.~J. Koenderink and A.~J. van Doorn, ``Affine structure from motion,''
  \emph{J. Opt. Soc. Am. A}, vol.~8, no.~2, pp. 377--385, 1991.

\bibitem{bartoli05}
A.~Bartoli and P.~Sturm, ``Structure-from-motion using lines: Representation,
  triangulation, and bundle adjustment,'' \emph{Computer Vision and Image
  Understanding (CVIU)}, vol. 100, no.~3, pp. 416--441, 2005.

\bibitem{schonberger16}
J.~L. Schönberger and J.-M. Frahm, ``Structure-from-motion revisited,'' in
  \emph{IEEE Conf. Computer Vision and Pattern Recognition (CVPR)}, 2016, pp.
  4104--4113.

\bibitem{triggs00}
B.~Triggs, P.~F. McLauchlan, R.~I. Hartley, and A.~W. Fitzgibbon, ``Bundle
  adjustment --- a modern synthesis,'' in \emph{Vision Algorithms: Theory and
  Practice}, 2000, pp. 298--372.

\bibitem{civera08}
J.~{Civera}, A.~J. {Davison}, and J.~M.~M. {Montiel}, ``Inverse depth
  parametrization for monocular {SLAM},'' \emph{IEEE Trans. Robotics (T-RO)},
  vol.~24, no.~5, pp. 932--945, 2008.

\bibitem{luca08}
A.~D. Luca, G.~Oriolo, and P.~R. Giordano, ``Feature depth observation for
  image-based visual servoing: Theory and experiments,'' \emph{The
  International Journal of Robotics Research (IJRR)}, vol.~27, no.~10, pp.
  1093--1116, 2008.

\bibitem{martinelli12}
A.~{Martinelli}, ``Vision and imu data fusion: Closed-form solutions for
  attitude, speed, absolute scale, and bias determination,'' \emph{IEEE Trans.
  Robotics (T-RO)}, vol.~28, no.~1, pp. 44--60, 2012.

\bibitem{chaumette06}
F.~Chaumette, S.~Boukir, P.~Bouthemy, , and D.~Juvin, ``Structure from
  controlled motion,'' \emph{IEEE Trans. Pattern Analysis and Machine
  Intelligence (T-PAMI)}, vol.~18, no.~5, pp. 492--504, 1996.

\bibitem{spica13}
R.~{Spica} and P.~{Robuffo Giordano}, ``A framework for active estimation:
  Application to structure from motion,'' in \emph{IEEE Conf. Decision and
  Control (CDC)}, 2013, pp. 7647--7653.

\bibitem{spica14}
R.~{Spica}, P.~{Robuffo Giordano}, and F.~{Chaumette}, ``Active structure from
  motion: Application to point, sphere, and cylinder,'' \emph{IEEE Trans.
  Robotics (T-RO)}, vol.~30, no.~6, pp. 1499--1513, 2014.

\bibitem{spica15}
------, ``Plane estimation by active vision from point features and image
  moments,'' in \emph{IEEE Int'l Conf. Robotics and Automation (ICRA)}, 2015,
  pp. 6003--6010.

\bibitem{mateus18}
A.~{Mateus}, O.~{Tahri}, and P.~{Miraldo}, ``Active structure-from-motion for
  3d straight lines,'' in \emph{IEEE/RSJ Int'l Conf. Intelligent Robots and
  Systems (IROS)}, 2018, pp. 5819--5825.

\bibitem{mateus19}
------, ``Active estimation of 3d lines in spherical coordinates,'' in
  \emph{American Control Conf. (ACC)}, 2019, to appear.

\bibitem{romulo18}
R.~T. Rodrigues, M.~Basiri, A.~P. Aguiar, and P.~Miraldo, ``Low-level active
  visual navigation: Increasing robustness of vision-based localization using
  potential fields,'' \emph{IEEE Robotis and Automation Letters (RA-L)},
  vol.~3, no.~3, pp. 2079--2086, 2018.

\bibitem{costante2018}
G.~Costante, J.~Delmerico, M.~Werlberger, P.~Valigi, and D.~Scaramuzza,
  \emph{Exploiting Photometric Information for Planning Under
  Uncertainty}.\hskip 1em plus 0.5em minus 0.4em\relax Springer, 2018, vol.~1,
  pp. 107--124.

\bibitem{spica17a}
R.~Spica, P.~R. Giordano, and F.~Chaumette, ``Coupling active depth estimation
  and visual servoing via a large projection operator,'' \emph{The
  International Journal of Robotics Research}, vol.~36, no.~11, pp. 1177--1194,
  2017.

\bibitem{hartley03}
R.~Hartley and A.~Zisserman, \emph{Multiple View Geometry in Computer Vision},
  2nd~ed.\hskip 1em plus 0.5em minus 0.4em\relax New York, NY, USA: Cambridge
  University Press, 2003.

\bibitem{slotine05}
J.-J.~E. Slotine and W.~Li, \emph{Applied nonlinear control}.\hskip 1em plus
  0.5em minus 0.4em\relax Prentice-Hall, 1991.

\bibitem{rodrigues19r}
R.~T. Rodrigues, P.~Miraldo, D.~V. Dimarogonas, and A.~P. Aguiar, ``{On the
  Guarantees of Incremental Depth Estimation and its Applications in Visual
  Servoing (Proof of Theorem 2)} -- available here:
  \url{https://c2sr.fe.up.pt/TechReports/ReportICRA20.pdf},'' Universidade do
  Porto, Tech. Rep., 09 2019.

\end{thebibliography}

\end{document}